\documentclass{article}
\usepackage[utf8]{inputenc}
\usepackage{caption,graphicx,amsmath,amsthm,amssymb,xcolor,enumitem,subfigure,float,verbatim,hyperref,pgfplots,bibentry,mathtools,soul,isomath, cancel}
\floatstyle{plaintop}
\restylefloat{table}
\usepackage{mathrsfs}
\usepackage{indentfirst}
\usepackage{algorithm}
\usepackage{algorithmic}
\usepackage{booktabs}
\usepackage{nccmath}
\usepackage{pifont}
\usepackage{graphicx}
\usepackage{subcaption}
\usepackage{booktabs}
\usepackage{graphicx}
\usepackage{multirow}
\usepackage{multicol, MnSymbol}
\usepackage[dvipsnames]{xcolor}
\usetikzlibrary{shapes.geometric, arrows}
\usetikzlibrary{backgrounds}
\usetikzlibrary{patterns, shapes.misc, positioning}
\usepackage{pgfplots}
\usepackage{tikz}
\usepackage{xcolor}
\usepackage{float}
\usepackage{placeins}
\usetikzlibrary{arrows.meta, positioning, shapes.geometric}
\pgfplotsset{width=10cm,compat=1.9,tick scale binop=\times}
\usepackage{indentfirst}
\usepackage[a4paper,bindingoffset=0.2in,%
left=0.8in,right=1in,top=1.2in,bottom=1.2in,%
footskip=.25in]{geometry}
\usepackage{relsize}
\usepackage{appendix}
\usepackage[english]{babel}
\newtheorem{theorem}{Theorem}

\begin{document}
\title
{\bf{Designing a Robust, Bounded, and Smooth Loss Function for
Improved Supervised Learning}}
\author {\small Soumi Mahato
\footnote{Soumi\_p220127ma@nitc.ac.in}   and Lineesh M.C
\footnote{lineesh@nitc.ac.in} \\ \small Department of Mathematics, National Institute of Technology, Calicut,\\
\small Kerala, India-673601}
\date{}
\maketitle
\begin{abstract} 

The loss function is crucial to machine learning, especially in supervised learning frameworks. It is a fundamental component that controls the behavior and general efficacy of learning algorithms. However, despite their widespread use, traditional loss functions have significant drawbacks when dealing with high-dimensional and outlier-sensitive datasets, which frequently results in reduced performance and slower convergence during training. In this work, we develop a robust, bounded, and smooth (RoBoS-NN) loss function to resolve the aforementioned hindrances. The generalization ability of the loss function has also been theoretically  analyzed to rigorously justify its robustness. Moreover, we implement RoboS-NN loss in the framework of a neural network (NN) to forecast time series and present a new robust algorithm named $\mathcal{L}_{\text{RoBoS}}$-NN. To assess the potential of $\mathcal{L}_{\text{RoBoS}}$-NN, we conduct experiments on multiple real-world datasets. In addition, we infuse outliers into data sets to evaluate the performance of $\mathcal{L}_{\text{RoBoS}}$-NN in more challenging scenarios. Numerical results show that $\mathcal{L}_{\text{RoBoS}}$-NN outperforms the other benchmark models in terms of accuracy measures.

\vspace{0.5 cm}
\noindent \textbf{ Keywords: Loss function, Time series Forecasting, Neural Network, Machine Learning. }  \\
 \noindent   \textbf{ Mathematis Subject Classifications: 68T07, 68T09. } 
\end{abstract}

\section{Introduction and Motivation}
In data analysis, classification and regression are fundamental tasks within the supervised machine learning paradigm, where models are trained on labeled data to predict outputs for unseen instances. 
 A central concept is the loss function, which quantifies the discrepancy between predicted and true outputs, thereby shaping the learning process. Neural networks (NNs) constitute a potent class of supervised machine learning algorithms capable of modeling intricate nonlinear relationships in data. Grounded in statistical learning theory and function approximation, NN training is performed by minimizing an empirical risk objective via gradient-based optimization. Their architectural flexibility and depth endow them with high representational capacity, enabling competitive generalization performance across diverse applications. In this paper, we carried out a detailed investigation of the relationship between loss functions and supervised learning within the neural network framework, highlighting how the choice of loss functions influences convergence behavior, robustness and predictive performance.\\ This study is focused on the regression task using a multilayer perceptron (MLP). Let the training dataset be defined as $\{x_k, y_k\}_{k=1}^n$, where $x_k \in \mathbb{R}^m$ denotes the input feature vector and $y_k \in \mathbb{R}$ represents the corresponding continuous target value. An MLP is a feedforward neural network composed of an input layer, one or more hidden layers, and an output layer, where each layer performs an affine transformation followed by a nonlinear activation function. For a given input $x_k$, the network produces a prediction $\hat{y}_k = f(x_k; \theta)$, where $f(\cdot)$ denotes the nonlinear mapping induced by the network architecture and $\theta$ represents the collection of all weights and biases. The objective of MLP regression is to estimate the parameters $\theta$ by minimizing an empirical risk function of the form
\begin{equation}
\min_{\theta} \; \frac{1}{n}\sum_{k=1}^n L\!\left(y_k - f(x_k;\theta)\right),
\end{equation}
where $L(\cdot)$ is a regression loss function that measures the discrepancy between the true target and the predicted output. The choice of the loss function plays a critical role in determining the robustness and generalization performance of the MLP, particularly in the presence of noise and outliers. Commonly used loss functions include the squared error, absolute error, and their robust variants, which provide the foundation for developing improved learning models in regression settings.

\begin{figure}[t]
\centering

\begin{minipage}{0.32\textwidth}
\centering
\includegraphics[width=\textwidth]{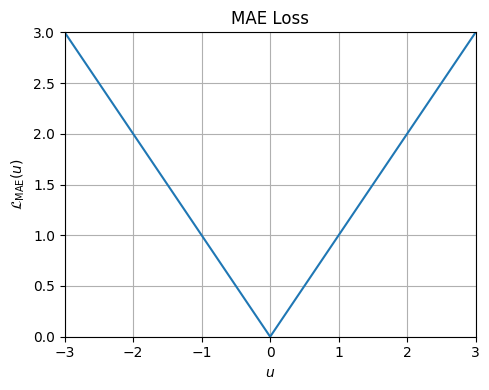}
\par\smallskip
{\small (a)}
\end{minipage}\hfill
\begin{minipage}{0.32\textwidth}
\centering
\includegraphics[width=\textwidth]{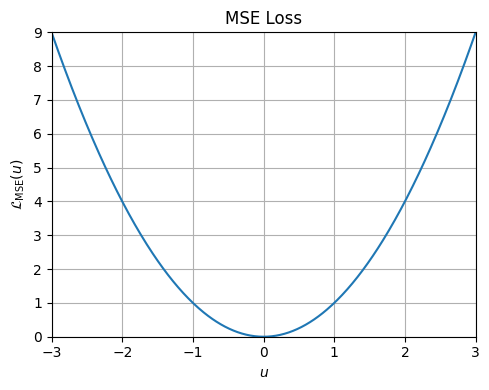}
\par\smallskip
{\small (b)}
\end{minipage}\hfill
\begin{minipage}{0.32\textwidth}
\centering
\includegraphics[width=\textwidth]{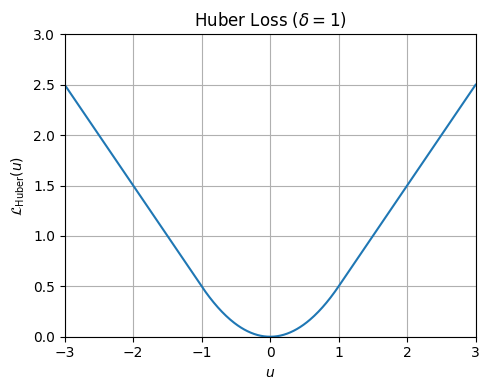}
\par\smallskip
{\small (c)}
\end{minipage}

\vspace{0.35cm}

\begin{minipage}{0.32\textwidth}
\centering
\includegraphics[width=\textwidth]{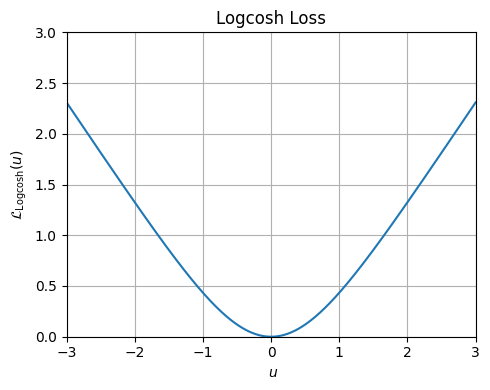}
\par\smallskip
{\small (d)}
\end{minipage}\hfill
\begin{minipage}{0.32\textwidth}
\centering
\includegraphics[width=\textwidth]{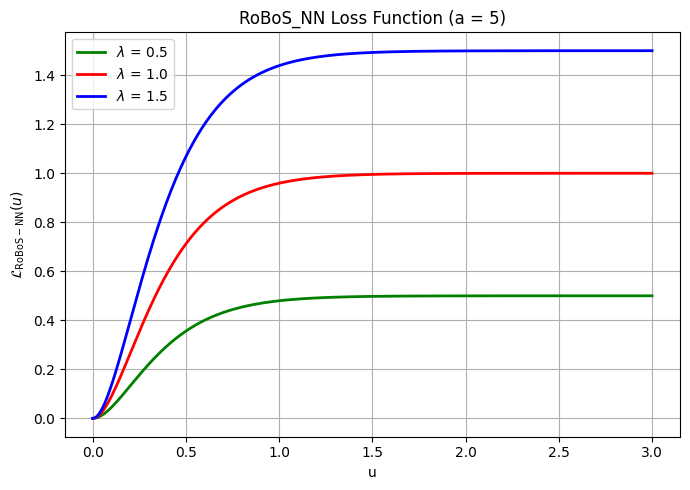}
\par\smallskip
{\small (e)}
\end{minipage}\hfill
\begin{minipage}{0.32\textwidth}
\centering
\includegraphics[width=\textwidth]{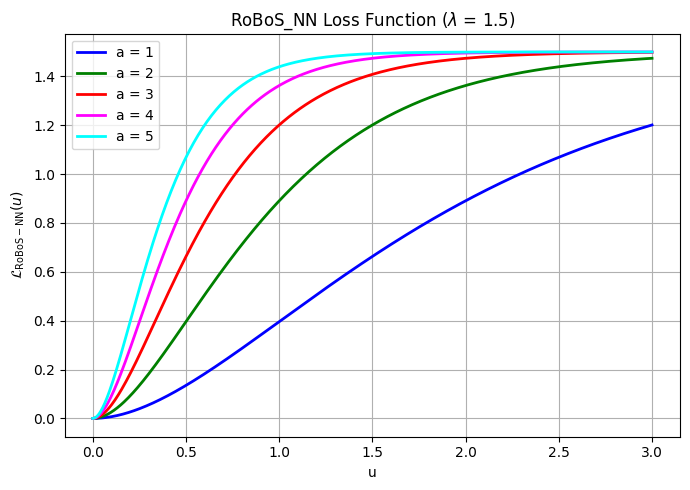}
\par\smallskip
{\small (f)}
\end{minipage}

\caption{
Comparison of loss function profiles:
(a) MAE loss,
(b) MSE loss,
(c) Huber loss,
(d) Log-cosh loss,
(e) RoBoS-NN loss with varying shape parameter $\lambda$,
(f) RoBoS-NN loss with varying robustness parameter $a$.}

\label{fig:loss_comparison}
\end{figure}


\textbf{Mean Square Loss:}
The squared loss function is one of the most widely used loss functions in forecasting tasks, which measures the squared difference between the true value and the predicted value. It is also smooth and convex in nature and is defined as follows: 
\[
\mathcal{L}_{\text{Square}} = \frac{1}{n}\sum (y_i - \hat{y}_i)^2
\]
Square loss is highly sensitive to outliers, since squaring the error amplifies larger deviations, and hence causing degradation in predictive performance 
\\

\textbf{Mean Absolute Loss:}
The mean absolute function quantifies the absolute difference between the true value and the predicted value, which is used in training neural networks for forecasting. It is defined as follows:
\[
\mathcal{L}_{\text{Absolute}}(y,\hat{y}) = \frac{1}{n}\sum |y_i - \hat{y}_i|
\]
since it does not increase rapidly with the magnitude of the prediction error, it shows better robustness than the squared loss function, especially in the presence of outliers. However, this convex loss function is not smooth near zero which makes the optimization procedure complicated.\\ 
\textbf{Huber:}
The Huber loss function can be considered as a piecewise combination of the square loss and the absolute loss functions. It introduces a threshold parameter $\delta$ that serves as a boundary for distinguishing between regular and outlier samples. The mathematical formula for Huber loss is given by:
\[
\mathcal{L}_{Huber}(y,\hat{y}) = 
\begin{cases} 
\tfrac{1}{2}(y - \hat{y})^2 & |y - \hat{y}| \leq \delta, \\ 
\delta (|y - \hat{y}| - \tfrac{1}{2}\delta) & \text{otherwise}.
\end{cases}
\]
\textbf{Log-Cosh:}
The log-cosh loss is defined as the logarithm of the hyperbolic cosine of the prediction error. For small errors, it closely approximates the quadratic loss 
$\frac{1}{2}(y-f(x))^2$, whereas for large errors, it behaves approximately as 
 $\lvert{y-f(x)}\rvert-log2$. This structure makes the log-cosh loss function similar to the Huber loss, providing robustness to outliers while maintaining smooth and adaptive gradients. A notable advantage of this loss is that it is twice differentiable everywhere, which is particularly beneficial for optimization algorithms that rely on second-order information, such as Newton-type methods. However, like the Huber loss, the gradient and Hessian of the log-cosh loss saturate for samples with large errors, which may reduce optimization efficiency or lead to practical issues, for example, fewer effective split points in tree-based models such as XGBoost. The mathematical equation for this function is as follow:
\[
\mathcal{L}_{\text{logcosh}}(y, \hat{y}) = \frac{1}{n}\sum \log\!\big(\cosh(y_i - \hat{y}_i)\big)
\]
RoBoSS (Robust, Bounded, Sparse, and Smooth) loss was recently introduced for supervised
\textit{classification} within the support vector machine framework, where it demonstrated
strong robustness and favorable theoretical guarantees \cite{Akhtar2024RoBoSS}.
The RoBoSS loss is defined as
\[
\mathcal{L}_{\text{RoBoSS}}(u)=
\begin{cases}
\lambda\left\{1-(au+1)\exp(-au)\right\}, & u>0,\\
0, & u\leq 0,
\end{cases}
\]
where $a>0$ and $\lambda>0$ denote the shape and bounding parameters, respectively.
Although effective in classification, its applicability to regression and time series forecasting with continuous-valued outputs remains unexplored.
Motivated by its boundedness and smoothness, we extend the RoBoSS loss to regression , resulting in the proposed RoBoS-NN which is defned as:
\[
L(u) =
\begin{cases}
\lambda \left\{ 1 - \left( a\sqrt{u^{2}+\epsilon} - a\sqrt{\epsilon} + 1 \right) 
\exp\!\left(-\left(a\sqrt{u^{2}+\epsilon} - a\sqrt{\epsilon}\right)\right) \right\}
\end{cases}
\]
where $u=|y-\hat{y}|$ and $y$ and $\hat{y}$ denote the actual value and the predicted value, respectively. The parameters $a$, $\lambda$, $\epsilon>0$ represent shape parameter, bound parameter and stability parameter consecutively. Thereafter, we integrate this RoBoS-NN loss function in the framework of neural network and proposed $\mathcal{L}_{\text{RoBoS}}$-NN, which is given by : 
\[
\min_{\theta}
\;\frac{\lambda}{2}\|\theta\|^2
+ \frac{1}{n}\sum_{k=1}^n
\mathcal{L}_{\text{RoBoS-NN}}\!\left(
y_k - f(x_k;\theta)
\right).
\]
The non-convex nature of the proposed $\mathcal{L}_{\text{RoBoS}}$-NN loss function poses challenges for optimizing the neural network model using conventional optimization methods. However, the smooth and differentiable structure of the RoBoS-NN loss enables the use of gradient-based optimization techniques for solving the learning problem. In this paper, we employ the Adam optimization algorithm to solve the proposed $\mathcal{L}_{\text{RoBoS}}$-NN framework, owing to its adaptive learning-rate mechanism and efficiency in handling large-scale neural network optimization problems~\cite{kingma2014adam}.

\section{Proposed work}
\begin{tabular}{lllll}
\toprule
Loss function & Robust to outliers & Bounded & Smooth & Convex \\
\midrule
Square Loss    & \textcolor{red}{\ding{55}} & \textcolor{red}{\ding{55}} & \textcolor{blue}{\ding{51}} & \textcolor{blue}{\ding{51}} \\
Absolute Loss  & \textcolor{blue}{\ding{51}} & \textcolor{red}{\ding{55}} & \textcolor{red}{\ding{55}} & \textcolor{blue}{\ding{51}} \\
Huber Loss     & \textcolor{blue}{\ding{51}} & \textcolor{red}{\ding{55}} & \textcolor{blue}{\ding{51}} & \textcolor{blue}{\ding{51}}\\
Log-Cosh Loss  & \textcolor{blue}{\ding{51}} & \textcolor{red}{\ding{55}} & \textcolor{blue}{\ding{51}} & \textcolor{blue}{\ding{51}} \\
RoBoS-NN Loss  & \textcolor{blue}{\ding{51}} & \textcolor{blue}{\ding{51}} & \textcolor{blue}{\ding{51}}  & \textcolor{red}{\ding{55}} \\
\bottomrule
\end{tabular}\\
\section{THEORETICAL EVALUATION OF THE PROPOSED ROBOS\_NN LOSS FUNCTION}
\begin{theorem}
Given, ${H_d}$ be the class of real-valued networks of depth $d$ over the domain $X$, where each parameter matrix $W_j$ has Frobenius norm atmost $M_F(j)$ and with $1$-Lipschitz, positive homogeneous activation functions. Let $f_p$ be the predictor produced by Robos-NN. Then for any, $0<\epsilon<1$, with confidence $1-\epsilon$,\\
\[{R(f_p)-R_z(f_p)\leq\frac{2aB(\sqrt{2log(2)d}+1)\prod_{j=1}^{d}M_F(j)}{e\sqrt{n}}} + \sqrt{\frac{8 ln\frac{1}\epsilon{}}{n}}\]
\end{theorem}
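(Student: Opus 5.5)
We will use the standard Rademacher-complexity generalization bound for bounded losses, combined with Talagrand's contraction lemma and the size-independent Rademacher bound of Golowich, Rakhlin and Shamir for depth-$d$ networks with Frobenius-norm constraints. Throughout, $R(f)$ denotes the expected RoBoS-NN loss and $R_z(f)$ its empirical counterpart on the sample $z=\{(x_k,y_k)\}_{k=1}^n$. We take $B$ to be a bound on $\|x\|$ over $X$, and we normalise the bound parameter so that the loss takes values in an interval of length at most $1$ (e.g. $\lambda=1$), since $0\le L(u)<\lambda$.

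\textbf{Step 1 (uniform deviation).} For a loss class with values in $[0,1]$, McDiarmid's inequality plus symmetrization gives, with probability at least $1-\epsilon$, simultaneously for all $f\in H_d$,
\[
R(f)-R_z(f)\le 2\,\mathfrak{R}_n(L\circ H_d)+c\sqrt{\frac{\ln(1/\epsilon)}{n}} .
\]
Using the looser constant corresponding to a loss range of $2$ (or the two-sided version) produces exactly the term $\sqrt{8\ln(1/\epsilon)/n}$. Since the bound is uniform over $H_d$, it holds in particular for $f_p$.

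\textbf{Step 2 (Lipschitz constant of the loss).} Write $t=a(\sqrt{u^2+\epsilon}-\sqrt{\epsilon})\ge 0$, so that $L=\lambda\{1-(t+1)e^{-t}\}$. Then
\[
\frac{dL}{du}=\lambda\, t e^{-t}\cdot \frac{a u}{\sqrt{u^2+\epsilon}} .
\]
Using $te^{-t}\le 1/e$ and $|u|/\sqrt{u^2+\epsilon}\le 1$, we get $|L'(u)|\le \lambda a/e$. Hence $L$ is $(a/e)$-Lipschitz under the normalisation $\lambda=1$; this is the source of the factor $a/e$. Because $u=y-f(x)$ and $y$ is fixed for each sample, Talagrand's contraction lemma yields
\[
\mathfrak{R}_n(L\circ H_d)\le \frac{a}{e}\,\mathfrak{R}_n(H_d).
\]

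\textbf{Step 3 (network complexity).} We invoke the bound of Golowich et al.\ for depth-$d$ networks with $1$-Lipschitz, positive-homogeneous activations and $\|W_j\|_F\le M_F(j)$:
\[
\mathfrak{R}_n(H_d)\le \frac{B\bigl(\sqrt{2\log(2)\,d}+1\bigr)\prod_{j=1}^d M_F(j)}{\sqrt n}.
\]
Combining Steps 1–3 gives the first term $2aB(\sqrt{2\log(2)d}+1)\prod_j M_F(j)/(e\sqrt n)$, and hence the stated inequality.

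\textbf{Main obstacle.} The delicate point is the bookkeeping of constants. The Lipschitz bound must be derived sharply via $\sup_{t\ge0} te^{-t}=1/e$, and the parameter $\lambda$ must be absorbed by normalisation, or else it will appear as an extra factor multiplying both terms. In addition, the range-dependent constant in McDiarmid's inequality must be matched to the $\sqrt{8\ln(1/\epsilon)/n}$ term. One must also check that the contraction lemma applies even though $L$ is composed with $y-f(x)$ rather than with $f(x)$ directly. This is fine, because the map $v\mapsto L(y_k-v)$ is $(a/e)$-Lipschitz for each fixed $k$.
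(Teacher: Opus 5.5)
Your proof is correct and has the same three-step skeleton as the paper's: a Rademacher deviation bound, contraction with Lipschitz constant $a/e$, then Theorem 1 of Golowich et al. You source the first two steps differently, though. The paper cites Theorem 8 of Bartlett--Mendelson, which bounds the deviation by $R_n(\tilde\phi\circ H_d)$ for the centered class $\phi(y,f(x))-\phi(y,0)$. It then applies their Theorem 12, $R_n(\phi\circ F)\le 2L_\phi R_n(F)$ when $\phi(0)=0$. So in the paper the factor $2$ comes from the contraction step, the centering exists only to make $\phi(0)=0$ hold, and the value $a/e$ is asserted rather than computed. You instead get the factor $2$ from symmetrization and use the Ledoux--Talagrand contraction without absolute values. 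That version needs no centering and handles the sample-dependent maps $v\mapsto L(y_k-v)$ directly. You also derive $a/e$ from $\sup_{t\ge0}te^{-t}=1/e$, and you make explicit the assumption $\lambda\le1$ that the paper leaves implicit; it is needed both for the $[0,1]$ range and for the constant to be $a/e$ rather than $\lambda a/e$.

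The paper's route is shorter because it cites a packaged theorem. Yours uses one normalization of Rademacher complexity throughout (Golowich et al.'s: $\frac1n$, no absolute value), so the constant $2$ genuinely comes out. Bartlett--Mendelson's $R_n$ carries $\frac2n$ and an absolute value, so plugging Golowich et al.'s bound into it, as the paper does, silently loses a factor $2$. Their Theorem 8 also has $\ln(2/\epsilon)$ rather than $\ln(1/\epsilon)$.

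One remark in your Step 1 is inaccurate: a loss of range $2$ gives $\sqrt{2\ln(1/\epsilon)/n}$, not $\sqrt{8\ln(1/\epsilon)/n}$. This is harmless, because the $[0,1]$ McDiarmid term $\sqrt{\ln(1/\epsilon)/(2n)}$ is already smaller than the stated one, so nothing needs to match exactly.
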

\begin{proof}
From theorem $8$ of \cite{bartlett2002rademacher}, for any $0<\epsilon<1$, we have\\
\[R(f^{L_{Robos\_NN}}_p)-R_z(f^{L_{Robos\_NN}}_p)\leq R_n(\tilde{\phi}   \circ {H_d}) +\sqrt{\frac{8 ln\frac{1}\epsilon{}}{n}}\]
where, $$\tilde{\phi}\circ {H_d}=\{(x,y)\rightarrow \phi(y,f(x))-\phi(y,0):f\in H_d\}$$
Now, from theorem 12 of \cite{bartlett2002rademacher}, 
\begin{equation}
R_n(\tilde{\phi}   \circ {H_d})\leq2\frac{a}{e}R_n(H_d)
\end{equation}
where $\frac{a}{e}$ is the lipschitz constant obtained from the proposed RoBos\_NN loss function.\\

Also, theorem 1 of \cite{golowich2020size} gives, 
\begin{equation}
R_n(H_d)\leq \frac{B(\sqrt{2log(2)d}+1)\prod_{j=1}^{d}M_F(j)}{\sqrt{n}}
\end{equation}
Hence, for any $0<\epsilon<1$, we have
\[{R(f_p)-R_z(f_p)\leq\frac{2aB(\sqrt{2log(2)d}+1)\prod_{j=1}^{d}M_F(j)}{e\sqrt{n}}} + \sqrt{\frac{8 ln\frac{1}\epsilon{}}{n}}\]
\end{proof}

\section{Optimization of $\mathcal{L}_{\text{RoBoS}}$-NN}

\begin{figure}[H]
\centering
\begin{tikzpicture}[
    node distance=1.4cm,
    every node/.style={font=\small},
    block/.style={
        rectangle, draw=black, rounded corners,
        minimum width=3.8cm,
        minimum height=0.9cm,
        align=center,
        fill=blue!8
    },
    bigblock/.style={
        ellipse, draw=black,
        minimum width=8.2cm,
        minimum height=2.8cm,
        align=center,
        fill=orange!10
    },
    arrow/.style={->, thick}
]

\node[block] (data) {Training Data};

\node[block, below=of data] (nn) {Neural Network\\
Feature Mapping};

\node[bigblock, below=of nn] (objective) {
\textbf{Optimization Objective}\\[4pt]
$\displaystyle
\min_{\theta}
\;\frac{\lambda}{2}\|\theta\|^2
+ \frac{1}{n}\sum_{k=1}^n
\mathcal{L}_{\text{RoBoS-NN}}\!\left(
y_k - f(x_k;\theta)
\right)$\\[6pt]
\textit{Regularization term} \hfill \textit{RoBoS-NN loss term}
};

\node[block, below=of objective] (adam) {Adam Optimizer\\
(Backpropagation)};

\node[block, below=of adam] (decision) {Forecasting Function\\
$\hat{y} = f(x;\theta)$};

\draw[arrow] (data) -- (nn);
\draw[arrow] (nn) -- (objective);
\draw[arrow] (objective) -- (adam);
\draw[arrow] (adam) -- (decision);

\node[right=0.3cm of objective] (robos) {RoBoS-NN loss term};
\draw[->, dashed] (robos) -- (objective);

\node[left=0.3cm of objective] (reg) {Regularization};
\draw[->, dashed] (reg) -- (objective);

\end{tikzpicture}
\caption{Training framework of the proposed neural-network-based forecasting model using the RoBoS loss optimized via the Adam algorithm.}
\label{fig:robos_nn_adam}
\end{figure}
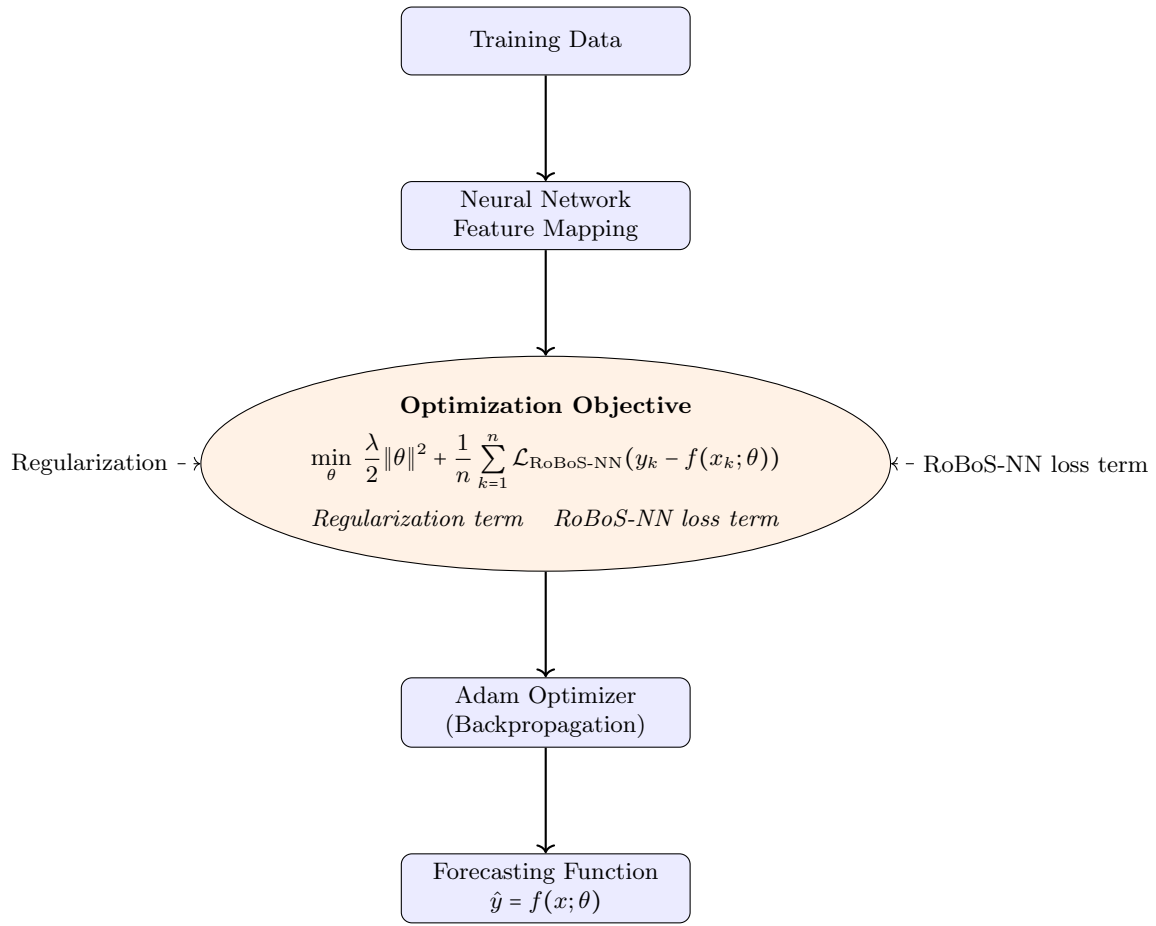

\begin{algorithm}[H]
\caption{Adam Optimization for RoBoS-NN}
\label{alg:robos_adam}
\begin{algorithmic}[1]

\REQUIRE Training dataset $\{(x_i, y_i)\}_{i=1}^{N}$,
neural network model $f_{\theta}(\cdot)$,
RoBoS-NN parameters $\lambda, a, \epsilon$,
learning rate $\eta$,
Adam parameters $\beta_1, \beta_2 \in (0,1)$,
numerical constant $\delta$,
maximum iterations $T$

\ENSURE Optimized network parameters $\theta^{*}$

\STATE Initialize network parameters $\theta_0$
\STATE Initialize first moment vector $m_0 = 0$
\STATE Initialize second moment vector $v_0 = 0$
\STATE Set time step $t = 0$

\FOR{$t = 1$ to $T$}

    \STATE Compute predictions:
    \[
    \hat{y}_i = f_{\theta}(x_i), \quad i = 1, \ldots, N
    \]

    \STATE Compute residuals:
    \[
    u_i = |y_i - \hat{y}_i|
    \]

    \STATE Compute RoBoS-NN loss:
    \[
    \mathcal{L}_i = \lambda \left[ 1 -
    \left(a\sqrt{u_i^2 + \epsilon} - a\sqrt{\epsilon} + 1\right)
    \exp\left(-\left(a\sqrt{u_i^2 + \epsilon} - a\sqrt{\epsilon}\right)\right)
    \right]
    \]

    \STATE Compute empirical risk:
    \[
    \mathcal{R}(\theta) = \frac{1}{N} \sum_{i=1}^{N} \mathcal{L}_i
    \]

    \STATE Compute gradient:
    \[
    g_t = \nabla_{\theta} \mathcal{R}(\theta)
    \]

    \STATE Update first moment estimate:
    \[
    m_t = \beta_1 m_{t-1} + (1 - \beta_1) g_t
    \]

    \STATE Update second moment estimate:
    \[
    v_t = \beta_2 v_{t-1} + (1 - \beta_2) g_t^2
    \]

    \STATE Bias correction:
    \[
    \hat{m}_t = \frac{m_t}{1 - \beta_1^t}, \quad
    \hat{v}_t = \frac{v_t}{1 - \beta_2^t}
    \]

    \STATE Parameter update:
    \[
    \theta_t = \theta_{t-1} - \eta \frac{\hat{m}_t}{\sqrt{\hat{v}_t} + \delta}
    \]

\ENDFOR

\STATE \textbf{Return} $\theta^{*} = \theta_T$

\end{algorithmic}
\end{algorithm}

\section{Computational Analysis}

The computational complexity of the proposed $\mathcal{L}_{\text{RoBoS}}$-NN is analyzed in terms of algorithmic time and space requirements. Since the network architecture and optimization strategy remain unchanged, the overall training complexity is primarily governed by the forward and backward propagation processes.

Let $N$ denote the number of training samples, $P$ the number of trainable parameters, and $T$ the number of training epochs. The forward and backward passes incur a computational cost of $\mathcal{O}(N \cdot P)$ per epoch. The RoBoS-NN loss function introduces additional scalar operations, including square-root and exponential evaluations, which incur a constant-time cost per sample. Consequently, the overall time complexity of the proposed method remains $\mathcal{O}(T \cdot N \cdot P)$, which is identical to that of standard neural networks trained with conventional loss functions such as MAE, MSE, Huber, and Log-cosh.

The Adam optimizer requires maintaining first- and second-order moment estimates for each parameter, resulting in a space complexity of $\mathcal{O}(P)$. No additional memory overhead is introduced by the RoBoS loss formulation.

Although empirical runtime measurements were not explicitly recorded, the identical training configuration across all loss functions ensures a fair computational comparison. Furthermore, the bounded and smooth nature of the RoBoS loss contributes to stable gradient updates, which can improve convergence behavior in the presence of outliers without increasing computational burden.

\section{Experimental Results}

\begin{table}[t]
\centering
\caption{MLP architecture and training hyperparameters for different datasets.}
\label{tab:training_hyperparams}
\begin{tabular}{lcccccc}
\toprule
Dataset & Seq\_Size & Dense Layers & Batch Size & Units & Learning Rate & Patience \\
\midrule
Daily\_Min\_Temperature & $30$ & $2$ & $32$ & $64$ &$0.001$  & $5$ \\
Electricity\_Load       & $96$ & $3$ & $256$ & $64$ & $0.001$ & $5$ \\
Monthly\_Sunspots       & $132$ & $3$ & $32$ & $64$ & $0.001$ & $5$ \\
Daily\_Gold\_Price      & $30$ & $2$ & $16$ & $32$ & $0.001$ & $5$ \\
\bottomrule
\end{tabular}
\end{table}

\begin{table}[t]
\centering
\caption{Hyperparameter search space and optimal values for the RoBoS-NN loss under different outlier levels}
\label{tab:roboss_outliers}
\begin{tabular}{llccc|ccc}
\toprule
Dataset & Outlier Level & 
\multicolumn{3}{c}{Search Space} & 
\multicolumn{3}{c}{Optimal Values} \\
\cmidrule(lr){3-5} \cmidrule(lr){6-8}
 &  & 
$a$ & $\epsilon$ & $\lambda$ & 
$a$ & $\epsilon$ & $\lambda$ \\
\midrule
Daily\_Min\_Temperature & 0\%  & $(1,10)$ & $(e^{-4}, 0.05)$ &$(0.1,1)$& $1.000$ & $0.028$ & $0.253$ \\
                        & 5\%  & $(1,10)$ & $(e^{-4}, 0.05)$ &$(0.1,1)$ & $1.010$ & $0.038$ & $0.168$ \\
 & 10\% & $(1,10)$ & $(e^{-4}, 0.05)$ & $(0.1,1)$ & $1.525$ & $0.037$ & $0.104$ \\
 & 20\% &$(1,10)$ & $(e^{-4}, 0.05)$ & $(0.1,1)$& $1.048$ & $0.50$ & $0.124$ \\
 & 30\% & $(1,10)$ & $(e^{-4}, 0.05)$ &$(0.1,1)$ & $1.040$ & $0.016$ & $0.104$ \\
\midrule
Electricity\_Load & 0\%  & $(1,10)$ & $(e^{-4}, 0.05)$ &$(0.1,1)$& $1.053$ & $0.041$ & $0.251$ \\
 & 5\%  & $(1,10)$ & $(e^{-4}, 0.05)$ & $(0.1,1)$ & $1.166$ & $0.028$ & $0.106$ \\
 & 10\% & $(1,10)$ & $(e^{-4}, 0.05)$ & $(0.1,1)$ & $1.010$ & $0.011$ & $0.351$ \\
 & 20\% & $(1,10)$ & $(e^{-4}, 0.05)$ & $(0.1,1)$& $1.089$ & $0.048$ & $0.100$ \\
 & 30\% & $(1,10)$ & $(e^{-4}, 0.05)$& $(0.1,1)$ & $1.025$ & $0.050$ & $0.620$ \\
\bottomrule
Monthly\_Sunspots & 0\%  & $(1,10)$ & $(e^{-4}, 0.05)$& $(0.1,1)$ & $1.327$ & $0.033$ &  $0.123$ \\
 & 5\%  & $(1,10)$ &$(e^{-4}, 0.05)$ & $(0.1,1)$& $1.557$ & $0.036$ & $0.112$ \\
 & 10\% & $(1,10)$ & $(e^{-4}, 0.05)$ & $(0.1,1)$ & $1.633$ & $0.025$ & $0.102$\\
 & 20\% & $(1,10)$ & $(e^{-4}, 0.05)$ & $(0.1,1)$ & $1.004$ & $0.012$ & $0.340$ \\
 & 30\% & $(1,10)$ & $(e^{-4}, 0.05)$ & $(0.1,1)$ & $1.008$ & $0.417$ & $0.122$ \\
\midrule
Daily\_Gold\_Price & 0\%  & $(1,10)$ & $(e^{-4}, 0.05)$ & $(0.1,1)$& $1.129$ & $0.027$ & $0.114$ \\
 & 5\%  & $(1,10)$ & $(e^{-4}, 0.05)$ & $(0.1,1)$ & $1.156$ & $0.041$ & $0.273$ \\
 & 10\% & $(1,10)$ & $(e^{-4}, 0.05)$ & $(0.1,1)$ & $2.202$ & $0.012$ & $0.105$ \\
 & 20\% & $(1,10)$ & $(e^{-4}, 0.05)$ & $(0.1,1)$ & $1.120$ & $0.040$ & $0.203$ \\
& 30\% & $(1,10)$ & $(e^{-4}, 0.05)$ & $(0.1,1)$ & $1.133$ & $0.020$ & $0.270$ \\
\bottomrule
\end{tabular}
\end{table}

\begin{table*}[t]
\centering
\caption{The prediction performance of the proposed $\mathcal{L}_{\text{RoBoS}}$-NN and competing models under different outlier levels using multiple error metrics.}
\resizebox{\textwidth}{!}{
\begin{tabular}{lccccccc}
\hline
\textbf{Dataset} & \textbf{Outliers} & \textbf{Metric} &
$\mathcal{L}_{\text{MAE}}$-NN &
$\mathcal{L}_{\text{MSE}}$-NN &
$\mathcal{L}_{\text{Huber}}$-NN &
$\mathcal{L}_{\text{Logcosh}}$-NN &
$\mathcal{L}_{\text{RoBoS}}$-NN \\
\hline

\multirow{15}{*}{Daily\_Min\_Temperature}
& \multirow{3}{*}{0\%} & MAE  & 2.286 & 2.269 & 2.282 & 2.297 & 2.330 \\
&                     & RMSE & 2.878 & 2.855 & 2.861 & 2.892 & 2.923 \\
&                     & MASE & 1.170 & 1.162 & 1.168 & 1.176 & 1.193 \\

& \multirow{3}{*}{5\%} & MAE  & 2.630 & 2.798 & 2.544 & 2.809 & 2.456 \\
&                     & RMSE & 3.277 & 3.467 & 3.193 & 3.466 & 3.095 \\
&                     & MASE & 1.342 & 1.432 & 1.302 & 1.438 & 1.257 \\

& \multirow{3}{*}{10\%} & MAE  & 3.815 & 3.147 & 3.128 & 2.993 & 2.532 \\
&                      & RMSE & 4.505 & 3.813 & 3.806 & 3.663 & 3.171 \\
&                      & MASE & 1.953 & 1.611 & 1.601 & 1.540 & 1.296 \\

& \multirow{3}{*}{20\%} & MAE  & 4.173 & 3.705 & 3.055 & 3.267 & 2.879 \\
&                      & RMSE & 4.844 & 4.396 & 3.715 & 3.939 & 3.528 \\
&                      & MASE & 2.136 & 1.897 & 1.564 & 1.673 & 1.474 \\

& \multirow{3}{*}{30\%} & MAE  & 4.096 & 3.317 & 3.317 & 2.857 & 2.629 \\
&                      & RMSE & 4.764 & 3.984 & 3.999 & 3.503 & 3.267 \\
&                      & MASE & 2.096 & 1.697 & 1.698 & 1.462 & 1.346 \\
\hline
\textbf{Total Avg.} &  & MAE  & 3.400 & 3.047 & 2.865 & 2.845 & 2.565 \\
                    &  & RMSE & 4.054 & 3.703 & 3.515 & 3.493 & 3.197 \\
                    &  & MASE & 1.739 & 1.560 & 1.467 & 1.458 & 1.313 \\
\hline

\multirow{15}{*}{Electricity\_Load}
& \multirow{3}{*}{0\%} & MAE  & 57.770 & 58.374 & 57.546 & 59.018 & 58.861 \\
&                     & RMSE & 68.038 & 68.809 & 67.787 & 69.439 & 69.374 \\
&                     & MASE & 3.082 & 3.114 & 3.070 & 3.148 & 3.141 \\

& \multirow{3}{*}{5\%} & MAE  & 145.357 & 131.154 & 136.228 & 133.343 & 113.306 \\
&                     & RMSE & 169.563 & 150.247 & 156.047 & 152.063 & 132.031 \\
&                     & MASE & 7.755 & 6.998 & 7.268 & 7.114 & 6.045 \\

& \multirow{3}{*}{10\%} & MAE  & 158.895 & 136.908 & 142.268 & 142.032 & 110.121 \\
&                      & RMSE & 182.459 & 157.100 & 123.620 & 163.081 & 128.636 \\
&                      & MASE & 8.478 & 7.305 & 6.596 & 7.578 & 5.876 \\

& \multirow{3}{*}{20\%} & MAE  & 166.700 & 122.210 & 121.945 & 126.128 & 105.783 \\
&                      & RMSE & 190.441 & 141.176 & 141.175 & 145.321 & 122.954 \\
&                      & MASE & 8.894 & 6.520 & 6.506 & 6.710 & 5.644 \\

& \multirow{3}{*}{30\%} & MAE  & 164.728 & 109.469 & 119.714 & 104.386 & 92.346 \\
&                      & RMSE & 188.465 & 126.197 & 137.636 & 122.759 & 108.278 \\
&                      & MASE & 8.789 & 5.841 & 6.387 & 5.569 & 4.927 \\
\hline
\textbf{Total Avg.} &  & MAE  & 138.690 & 111.623 & 115.540 & 112.981 & 96.083 \\
                    &  & RMSE & 159.793 & 128.706 & 125.253 & 130.533 & 112.255 \\
                    &  & MASE & 7.400 & 5.956 & 5.965 & 6.024 & 5.127 \\
\hline
\end{tabular}
}
\end{table*}

\begin{table*}[t]
\centering
\caption{The prediction performance of the proposed $\mathcal{L}_{\text{RoBoS}}$-NN and competing models under different outlier levels using multiple error metrics.}
\resizebox{\textwidth}{!}{
\begin{tabular}{lccccccc}
\hline
\textbf{Dataset} & \textbf{Outliers} & \textbf{Metric} &
$\mathcal{L}_{\text{MAE}}$-NN &
$\mathcal{L}_{\text{MSE}}$-NN &
$\mathcal{L}_{\text{Huber}}$-NN &
$\mathcal{L}_{\text{Logcosh}}$-NN &
$\mathcal{L}_{\text{RoBoS}}$-NN \\
\hline

\multirow{15}{*}{Monthly\_Sunspots}
& \multirow{3}{*}{0\%} & MAE  & 20.854 & 20.787 & 21.548 & 20.149 & 22.698 \\
&                     & RMSE & 28.417 & 27.995 & 28.009 & 27.029 & 29.952 \\
&                     & MASE & 1.408 & 1.404 & 1.455 & 1.361 & 1.533 \\

& \multirow{3}{*}{5\%} & MAE  & 54.447 & 31.118 & 31.213 & 44.892 & 35.528 \\
&                     & RMSE & 67.878 & 42.740 & 40.602 & 54.366 & 46.762 \\
&                     & MASE & 3.677 & 2.101 & 2.108 & 3.031 & 2.399 \\

& \multirow{3}{*}{10\%} & MAE  & 50.628 & 62.466 & 44.604 & 47.614 & 42.209 \\
&                      & RMSE & 63.477 & 69.638 & 58.497 & 62.411 & 52.946 \\
&                      & MASE & 3.419 & 4.218 & 3.012 & 3.215 & 2.850 \\

& \multirow{3}{*}{20\%} & MAE  & 62.163 & 64.180 & 39.157 & 43.119 & 32.227 \\
&                      & RMSE & 77.076 & 72.096 & 48.399 & 52.909 & 43.266 \\
&                      & MASE & 4.198 & 4.334 & 2.644 & 2.912 & 2.176 \\

& \multirow{3}{*}{30\%} & MAE  & 66.191 & 39.560 & 34.671 & 38.491 & 32.742 \\
&                      & RMSE & 81.144 & 52.241 & 45.830 & 52.309 & 43.167 \\
&                      & MASE & 4.470 & 2.671 & 2.341 & 2.599 & 2.211 \\
\hline
\textbf{Total Avg.} &  & MAE  & 50.857 & 43.622 & 34.239 & 38.853 & 33.081 \\
                    &  & RMSE & 63.598 & 52.942 & 44.267 & 49.805 & 43.219 \\
                    &  & MASE & 3.434 & 2.946 & 2.312 & 2.624 & 2.234 \\
\hline

\multirow{15}{*}{Daily\_Gold\_Price}
& \multirow{3}{*}{0\%} & MAE  & 22.263 & 22.687 & 23.358 & 21.578 & 19.175 \\
&                     & RMSE & 23.248 & 23.407 & 24.033 & 22.447 & 20.447 \\
&                     & MASE & 8.131 & 8.286 & 8.531 & 7.881 & 7.003 \\

& \multirow{3}{*}{5\%} & MAE  & 16.773 & 15.097 & 12.906 & 13.178 & 10.184 \\
&                     & RMSE & 17.579 & 16.195 & 14.116 & 14.416 & 11.538 \\
&                     & MASE & 6.126 & 5.514 & 4.714 & 4.813 & 3.720 \\

& \multirow{3}{*}{10\%} & MAE  & 21.150 & 25.250 & 25.649 & 21.632 & 20.752 \\
&                      & RMSE & 21.961 & 25.923 & 26.360 & 22.371 & 20.167 \\
&                      & MASE & 7.725 & 9.223 & 9.368 & 7.901 & 7.366 \\

& \multirow{3}{*}{20\%} & MAE  & 11.321 & 11.425 & 14.547 & 17.444 & 5.654 \\
&                      & RMSE & 12.560 & 12.656 & 15.597 & 18.545 & 6.641 \\
&                      & MASE & 4.135 & 4.173 & 5.313 & 6.371 & 2.065 \\

& \multirow{3}{*}{30\%} & MAE  & 17.138 & 8.103 & 4.813 & 12.257 & 5.092 \\
&                      & RMSE & 17.883 & 9.557 & 5.715 & 13.117 & 5.829 \\
&                      & MASE & 6.260 & 2.960 & 1.758 & 4.477 & 1.860 \\
\hline
\textbf{Total Avg.} &  & MAE  & 17.729 & 16.512 & 16.255 & 17.218 & 12.171 \\
                    &  & RMSE & 18.646 & 17.548 & 17.164 & 18.179 & 12.924 \\
                    &  & MASE & 6.475 & 6.031 & 5.937 & 6.289 & 4.403 \\
\hline
\end{tabular}
}
\end{table*}

\vspace{2mm}
\newpage

\section{Numerical experiments and Result Analysis}

\subsection{Evaluation on clean and outlier-contaminated time series datasets}
Here, we report experimental outcomes on 4 real-world dataset collected from Kaggle and UCI Machine Learning repositories under both clean and contaminated settings. For a comprehensive analysis of the robustness of the proposed method, varying levels of outliers ($5\%$, $10\%$, $20\%$, $30\%$) have been injected into the datasets. Thereafter, each dataset has been divided in an $80:20$ train-test split for MLP model training and forecasting. On clean datasets, RoBoS-NN forecasting errors match benchmark losses closely. Although standard losses occasionally edge out slightly, the differences stay minimal, indicating RoBoS-NN preserves accuracy in pristine conditions while delivering superior robustness against contamination. The robustness advantage of RoBoS-NN becomes most apparent when comparing total average errors across all noise levels, which illustrate overall performance under mixed clean and noisy conditions. $\mathcal{L}_{\text{RoBoS}}$-NN exhibits notable superiority on Daily\_Min\_Temperature dataset by reducing the average MAE by $24.56\%$, $15.82\%$, $10.47\%$, $9.84\%$ and by lowering the average RMSE by $21.13\%$, $13.65\%$, $9.05\%$, $8.46\%$ while the average MASE is decreased by $24.49\%$, $15.83\%$, $10.54\%$, $9.95\%$ when compared to $\mathcal{L}_{\text{MAE}}$-NN,
$\mathcal{L}_{\text{MSE}}$-NN, $\mathcal{L}_{\text{Huber}}$-NN,
$\mathcal{L}_{\text{Logcosh}}$-NN consecutively. Additonally, $\mathcal{L}{\text{RoBoS}}$-NN demonstrates remarkable performance on the Electricity\_Load dataset by minimizing the average MAE by approximately $30.7\%$, $13.9\%$, $16.8\%$, and $15.0\%$, and the average RMSE by $29.7\%$, $12.8\%$, $10.4\%$, and $14.0\%$, compared with $\mathcal{L}{\text{MAE}}$-NN, $\mathcal{L}{\text{MSE}}$-NN, $\mathcal{L}{\text{Huber}}$-NN, and $\mathcal{L}_{\text{Logcosh}}$-NN, respectively. Furthermore, the average MASE is lessened by $30.7\%$, $13.9\%$, $14.0\%$, and $14.9\%$ relative to the same benchmark losses. Moreover, $\mathcal{L}_{\text{RoBoS}}$-NN also shows greater robustness on the {Monthly\_Sunspots} dataset by consistently achieving the lowest total average errors across all evaluation metrics. Compared with $\mathcal{L}_{\text{MAE}}$-NN, $\mathcal{L}_{\text{MSE}}$-NN, $\mathcal{L}_{\text{Huber}}$-NN, and $\mathcal{L}_{\text{Logcosh}}$-NN, it reduces the average MAE by $35.00\%$, $24.17\%$, $3.38\%$, and $14.85\%$, and the average RMSE by $32.04\%$, $18.38\%$, $2.37\%$, and $13.22\%$, respectively. Also, the average MASE is decreased by $34.95\%$, $24.18\%$, $3.38\%$, and $14.88\%$, demonstrating improved relative forecasting accuracy with higher level of outliers. Besides, on the Daily\_Gold\_Price dataset,   $\mathcal{L}_{\text{RoBoS}}$-NN consistently outperforms the benchmark loss functions by achieving the lowest total average MAE, RMSE, and MASE values. Compared with $\mathcal{L}_{\text{MAE}}$-NN, $\mathcal{L}_{\text{MSE}}$-NN, $\mathcal{L}_{\text{Huber}}$-NN, and $\mathcal{L}_{\text{Logcosh}}$-NN, it reduces the average MAE by 
$31.36\%$, $26.29\%$, $25.10\%$, and $29.32\%$, and the average RMSE by $30.72\%$, $26.34\%$, $24.72\%$, and $28.93\%$, respectively. In addition, the average MASE is decreased by $31.99\%$, $27.00\%$, $25.85\%$, and $29.99\%$, highlighting the effectiveness of the proposed loss in preserving forecasting accuracy under substantial outlier contamination.

\newpage

\section{Conclusion and Future Work}
In conclusion, this paper proposed an innovative and novel RoBoS-NN loss function to mitigate the challenges in supervised learning paradigm. Since, RoBoS-NN loss is robust, bounded and smooth, it performs as a convenient tool in several machine learning and deep learning tasks. The rigorous theoretical analysis of RoBoS-NN including the generalization error bound establish this function as a credible choice for developing robust models in supervised machine learning. The hyperparameters ($a$, $\epsilon$, $\lambda$) of the loss function were optimized using the Tree-structured Parzen Estimator(TPE) implemented in the HyperOpt library. Further, the loss function has been employed in the framework of multilayer perceptron (MLP) for forecasting multiple time series datasets with and without introduced outliers. The expermental results specifically on contaminated datasets depicted the superior performance of $\mathcal{L}_{\text{RoBoS}}$-NN model\\
 The loss function lies at the core of both machine learning and deep learning models by governing how the model learns from data, manages outliers and noise, and generalizes to unseen samples. Hence, RoBoS-NN’s favorable theoretical properties make it an appealing candidate for deeper integration with modern architectures. As a future work, RoBoS-NN loss could be utilized in more advanced neural networks such as convolutional neural networks and recurrent neural networks which may lead to new algorithms with improved robustness and performance across a wide range of applications.

\section{Acknowledgments}
This work has been financially supported by the Department of Science and Technology (DST), Government of India, through the INSPIRE fellowship with no. DST/INSPIRE Fellowship/2022/IF220134. The Centre for Computational Modeling and Simulation, National Institute of Technology Calicut, has provided assistance for all computational work.

\bibliographystyle{plain}
 \bibliography{reference}

@article{bartlett2002rademacher,
  title={Rademacher and gaussian complexities: Risk bounds and structural results},
  author={Bartlett, Peter L and Mendelson, Shahar},
  journal={Journal of machine learning research},
  volume={3},
  number={Nov},
  pages={463--482},
  year={2002}
}

@article{golowich2020size,
  title={Size-independent sample complexity of neural networks},
  author={Golowich, Noah and Rakhlin, Alexander and Shamir, Ohad},
  journal={Information and Inference: A Journal of the IMA},
  volume={9},
  number={2},
  pages={473--504},
  year={2020},
  publisher={Oxford University Press}
}

@article{akhtar2024roboss,
  title={RoBoSS: A robust, bounded, sparse, and smooth loss function for supervised learning},
  author={Akhtar, Mushir and Tanveer, M and Arshad, Mohd},
  journal={IEEE Transactions on Pattern Analysis and Machine Intelligence},
  year={2024},
  publisher={IEEE}
}

@article{kingma2014adam,
  title={Adam: A method for stochastic optimization},
  author={Kingma, Diederik P},
  journal={arXiv preprint arXiv:1412.6980},
  year={2014}
}

\end{document}